%% file: main.tex
\documentclass[lettersize,journal]{IEEEtran}

\usepackage{amsmath,amsfonts}
\usepackage{amsthm}
\usepackage{algorithm2e}

\usepackage{algorithm} 
\usepackage{array}
\usepackage{url}
\usepackage{verbatim}
\usepackage{graphicx}
\usepackage{cite}
\usepackage{hyperref}
\usepackage{breqn}
\usepackage{booktabs}
\usepackage{multirow}

\hypersetup{hidelinks}

\theoremstyle{plain}
\newtheorem{theorem}{Theorem}[section]
\newtheorem{proposition}[theorem]{Proposition}

\newcolumntype{C}[1]{>{\centering\arraybackslash}p{#1}}
\allowdisplaybreaks
\begin{document}
\bstctlcite{BSTcontrol} 

\title{Censoring-Aware In-Context Learning for Generalized Supplier Lead Time Estimation in Supply Chain Planning}

\author{
Christopher Wang,
Sebastien Ouellet,
Behrouz Haji Soleimani,
and Ali Etemad
\thanks{Christopher Wang and Ali Etemad are with the Department of Electrical and Computer Engineering, Queen's University, Kingston, ON K7L 3N6, Canada.}
\thanks{Christopher Wang, Sebastien Ouellet, and Behrouz Haji Soleimani are with Kinaxis Inc., Ottawa, ON K2K 3J1, Canada.}
\thanks{Corresponding author: Christopher Wang \href{mailto:15cw62@queensu.com}{15cw62@queensu.com}}
}

\markboth{IEEE Transactions on Automation Science and Engineering, 2026}
{Wang \MakeLowercase{\textit{et al.}}: Censoring-Aware In-Context Learning}
\maketitle

\begin{abstract}
Supplier lead time forecasting is a central input to material requirements planning, inventory optimization, and supply chain risk management. However, many industrial lead time datasets are naturally right-censored: at the time forecasts are required, some orders have not yet arrived. Standard regression and classification approaches discard this information, while conventional survival models require task-specific modeling. We propose LeadTime-ICL (LT-ICL), a censoring-aware in-context learning model for probabilistic lead time forecasting. LT-ICL combines a transformer backbone with a conditional normalizing-flow head, producing a full predictive distribution over lead times. The model is pretrained on synthetic right-censored lead time tasks, enabling in-context adaptation to new industrial datasets without task-specific parameter updates. We provide theoretical support for this formulation by showing that excess CRPS is bounded by prior misspecification and amortized approximation errors, providing clear direction for improving forecasting performance. We evaluate LT-ICL on 24 proprietary supply-chain datasets spanning seven industries. LT-ICL achieves the lowest point-forecasting error on 15 of the 24 datasets, and the lowest probabilistic forecasting error on 14 datasets, yielding the best average rank across both. These results support right-censored probabilistic forecasting as a practical formulation for supplier lead time prediction and demonstrate that pretrained in-context models can provide accurate, low-adaptation-cost forecasting for industrial planning systems.
\end{abstract}

\def\abstractname{Note to Practitioners}
\begin{abstract}
This paper addresses the challenge of estimating supplier lead times from historical procurement data. Existing approaches often face two practical limitations: they must be retrained for each new firm or industry setting, and they typically cannot make direct use of open orders that have been placed but not yet received. We propose LT-ICL, a censoring-aware in-context learning method for supplier lead time prediction. LT-ICL can be redeployed across firms, industries, and datasets without task-specific retraining while incorporating open orders as partial observations of lead time. For practitioners, the main value of LT-ICL is that it shifts model adaptation from repeated retraining to amortized pretraining. When organizations can afford an upfront training cost, LT-ICL provides a reusable forecasting model that can be applied to new lead time datasets with low deployment overhead. The theoretical analysis also identifies two principled levers for practitioners to adapt LT-ICL to their use cases. First, practitioners can reduce prior misspecification, for example by incorporating domain knowledge into the synthetic pretraining process. Second, practitioners can reduce approximation error, for example by increasing model capacity, training for longer, or improving the pretraining procedure. LT-ICL can therefore support the development of reusable lead-time forecasting systems that efficiently adapt across industrial datasets.
\end{abstract}

\begin{IEEEkeywords}
 machine learning, deep learning, in-context learning, survival analysis, supply chain analytics
\end{IEEEkeywords}

\input{_01_introduction}
\input{_02_related_work}
\input{_03_methodology}
\input{_04_case_study}
\input{_05_results}
\input{_06_conclusion}

\bibliographystyle{IEEEtran}
\bibliography{references}

\end{document}

%% file: _01_introduction.tex
\section{Introduction}\label{introduction}
The proper specification of lead times has long been a challenge in supply chain management. In Material Requirements Planning (MRP) systems, planned lead times are critical inputs, with inaccurate lead time settings leading to degraded planning outcomes \cite{MARLIN1986179, Enns01012001}. Furthermore, lead time estimation has important implications outside of planning. Upstream, it contributes to the bullwhip effect, whereby variability in lead time estimates amplifies demand variability as it propagates through the supply chain \cite{michna2015impactleadtimeforecasting, Disney09102025}. Downstream, it induces the lead time syndrome, whereby lead time revisions increase variability in production order releases and can create temporary bottlenecks \cite{SELCUK2006427, Selcuk01052009}. These challenges are further compounded by planners often specifying conservatively long planned lead times as a buffer against uncertainty \cite{Graves2011}. Lead times also carry important contractual implications, as changes in supplier lead times are among the most common sources of buyer--supplier grievances \cite{DAS2003171}. Together, these consequences have motivated growing interest in data-driven approaches for lead time estimation.

Recent surveys on lead time estimation \cite{9143119, KOBLASA2026111713, MULLER2025100130} document the increasing adoption of learning methods. However, industrial lead time estimation poses two challenges for the design of general prediction methods. First, lead time datasets vary substantially across materials, suppliers, firms, and industries. Although different organizations may share relevant characteristics, such as common suppliers, lead time data is often proprietary. This makes learning a model that generalizes across industries challenging \cite{zheng2023federated}. Prior work has therefore focused on methodologies and evaluations tailored to individual firms and industries \cite{9143119, MULLER2025100130}. Second, at any given forecasting date, some orders are \textit{open}: their final lead times are unobserved, but their elapsed durations since being placed provide right-censored information. Prior work has largely discarded this information by formulating lead time estimation as regression \cite{STEINBERG2023100003, pom}, time-series forecasting \cite{gabellini, MATTHES2025807}, or classification via late-order prediction \cite{Brintrup02062020, ZAGHDOUDI2024110590, SHIDPOUR2025100172}. Each elapsed duration provides a useful lower bound on the eventual lead time, which to our knowledge has not been extensively explored in lead-time estimation.  

To address these gaps, we propose LeadTime-ICL (LT-ICL), an in-context learning method for censoring-aware lead time prediction. LT-ICL combines a transformer encoder \cite{vaswani2023attentionneed} with a conditional normalizing-flow head \cite{rezende2016variationalinferencenormalizingflows} to produce flexible predictive distributions over lead times. Right-censored observations are encoded using elapsed durations and censoring indicators, allowing open orders to contribute partial information during prediction. LT-ICL is pretrained on synthetic right-censored tasks sampled from a latent hierarchical semi-parametric prior designed to capture key properties of industrial lead time datasets. Because pretraining uses only synthetic data, the method does not require sharing proprietary training data and can be applied across industries without task-specific retraining. We validate LT-ICL on 24 real industrial lead time datasets spanning multiple industries and benchmark it against classical machine learning, deep learning, and other in-context learning regression- and survival-analysis methods for both point and probabilistic forecasting.  

This paper makes the following three contributions:
\begin{enumerate}
    \item We formulate supplier lead time prediction as a right-censored probabilistic estimation problem, allowing open orders to contribute information through their elapsed durations rather than treating them as missing or unusable.
    \item We introduce LT-ICL, a censoring-aware in-context learning model pretrained on synthetic data that produces full predictive distributions over lead times. Across 24 real industrial lead time datasets, LT-ICL achieves the best overall performance among the evaluated methods for both point and probabilistic forecasting.
    \item We derive a forecasting-risk decomposition that separates the error of LT-ICL into prior misspecification and amortized inference error. This identifies two principal sources of forecasting error and provides a theoretical basis for improving LT-ICL through richer synthetic priors and more accurate inference architectures. 
\end{enumerate}

%% file: _02_related_work.tex
\section{Related Work}\label{sec:related_work}
\subsection{Lead Time Estimation}
Tatsiopoulos and Kingsman classify two broad treatments of dynamic lead times in supply chain settings: endogenous and exogenous \cite{TATSIOPOULOS1983351}. In endogenous formulations, lead times are treated as decision variables that can be influenced through production planning and control \cite{MILNE2015220, JANSEN2019585}. By contrast, many operational planning models require lead times as exogenous inputs that must be specified in advance \cite{Graves2011}. Within this exogenous setting, Schneckenreither et al. further distinguished reactive, proactive, and predictive lead time strategies \cite{Schneckenreither03062021}. Our work belongs to the exogenously specified, predictive lead time setting, where lead times are estimated from historical data rather than adjusted through direct process control. 

Lead time estimation is most commonly formulated as a supervised regression problem with a continuous target. Tree-based ensembles (both bagging and boosting) are often strong empirical baselines, including in supplier lead time estimation for safety-stock and inventory planning \cite{BARROS2023106671,pom}, delivery-date prediction before detailed process planning \cite{s10845-023-02290-2, 11362374}, and product-availability prediction under disruption \cite{CAMUR2024123226}. However, these methods generally treat lead time prediction as a fully observed supervised learning problem and do not directly exploit partially observed open orders during training or evaluation.

Neural models have also been shown to produce strong results, such as multilayer perceptrons (MLP) outperforming competing methods in a simulated order-release planning setting \cite{Schneckenreither03062021}. Recurrent architectures such as long short term memory (LSTM) models have previously been applied as forecasters for lead times in both the automotive and coal industries \cite{gabellini, 11468209}. However, standard time-series formulations are not always well suited to transactional lead time data, where many concurrent orders may be placed at irregular times and responses occur at the order level rather than at regular time steps. Closest to our work, Matthes et al. compare transformer models with LSTM and NGBoost models for lead time prediction in the refining industry \cite{MATTHES2025807}. They find that transformers provide strong distributional coverage, though their evaluation is limited to a single firm, and their approach neither exploits censoring nor transfers to new datasets without retraining. In contrast, our goal is to learn a single censoring-aware model that can adapt to new lead time datasets without retraining.

\subsection{Censored and Distributional Prediction}
Open orders introduce a censoring structure into lead time prediction: the final lead time is unobserved, but the elapsed duration since order placement time provides a lower bound on the eventual lead time. This differs from standard missing-label prediction because open orders contain partial information that should not be discarded. Prior work has shown that lead time prediction improves as more procurement information becomes available for open orders \cite{STEINBERG2023100003}. However, this information is incorporated at discrete purchasing stages rather than through a general elapsed-duration censoring formulation.

Survival-analysis methods provide a natural statistical framework for censored outcomes. Cox proportional hazards models have previously been used to analyze lead times for metallic aerospace components with censored open-order information \cite{DeCosJuez01092012}. However, that study is primarily descriptive and provides limited comparison against modern predictive baselines. More recently, deep learning has been combined with censored information in supply-chain applications, but for time-to-survive forecasting of critical parts rather than supplier lead time prediction \cite{li2025end}. Our work instead treats supplier lead time estimation as a right-censored probabilistic prediction problem and evaluates both point and distributional forecasting performance across multiple real industrial datasets.

\subsection{In-Context Learning and Prior-Data Fitted Networks}
In-context learning provides a mechanism for adapting to a new prediction task at inference time without updated model parameters by conditioning on examples from that task \cite{dong2024surveyincontextlearning}. Prior-data fitted networks formalize this idea for tabular data by pretraining a transformer model on synthetic tasks sampled from a prior over data-generating processes \cite{müller2024transformersbayesianinference}. This process has been shown to approximate the Bayesian posterior predictive distribution for new datasets under the synthetic data prior \cite{nagler2023statisticalfoundationspriordatafitted}. This approach has been especially influential in tabular prediction, where models such as TabPFNv2 use synthetic pretraining to perform fast supervised learning on small tabular datasets without task-specific retraining \cite{Hollmann2025}. However, these types of models have not yet been extended to handle censored information. 

Our method applies this principle to censored lead time prediction. Unlike conventional supervised lead time models, which must be retrained or tuned for each dataset, LT-ICL is pretrained once on synthetic censored lead time tasks and then applied directly to new industrial datasets through in-context learning. Unlike existing in-context learning methods, LT-ICL is able to natively use censored information. This connects supplier lead time estimation to the broader literature on amortized Bayesian prediction and in-context learning, while addressing a domain-specific challenge that existing PFN models do not handle directly: right-censored transactional outcomes from open orders.

%% file: _03_methodology.tex
\section{Proposed Methodology}\label{sec:methodology}
\subsection{Problem Formulation}
\label{subsec:problem_formulation}
We consider the problem of forecasting supplier lead times from historical order data. Let $Y_i$ denote the true lead time of order $i$ with covariates $x_i \in \mathcal{X}$ and order date $d_i$. At a forecasting cutoff date $C$, orders can be divided into completed orders, whose arrival dates are observed, and open orders, which have been placed but not yet received, and future orders which have not yet been placed. Since $Y_i$ may be unobserved for open orders, we define the recorded duration
\begin{equation}
T_i = \min\{Y_i, C-d_i\},
\end{equation}
and the event indicator
\begin{equation}
\delta_i = \mathbb{I}\{Y_i \leq C-d_i\}.
\end{equation}
Thus, $\delta_i=1$ denotes a completed order for which $T_i=Y_i$, while $\delta_i=0$ denotes an open order for which $T_i=C-d_i$ and $Y_i>T_i$. A lead time dataset observed at cutoff $C$ is therefore
\begin{equation}
D=\{(x_i,T_i,\delta_i)\}_{i=1}^{n}.
\end{equation}
This formulation treats open orders as right-censored observations rather than missing labels.

Let $F_\theta(t\mid x)$, $f_\theta(t\mid x)$, and $S_\theta(t\mid x)=1-F_\theta(t\mid x)$ denote the conditional cumulative distribution, density, and survival function of lead times under parameters $\theta$. Under conditional independence assumptions, the likelihood factors as 
\begin{equation}
p(D \mid \theta) = \prod_{i=1}^{n} f_\theta(T_i\mid x_i)^{\delta_i} S_\theta(T_i\mid x_i)^{1-\delta_i}.
\end{equation}
Completed orders contribute through the density of the observed lead time, while open orders contribute through the probability of surviving beyond their elapsed duration. 

The forecasting objective is to produce a predictive distribution for a query order with covariates $x$. For a future order, this distribution is $p(Y\mid x,D)$. For an open order the relevant predictive distribution is conditional on the order having already survived beyond $C-d$, $p(Y\mid Y>C-d, x,D)$. This distinction is important because an open order should not be forecast as if it were newly placed; its elapsed duration is already informative.

A Bayesian formulation places a prior $p(\theta)$ over possible lead time data-generating processes and obtains the posterior
\begin{equation}
p(\theta\mid D) \propto  p(D\mid \theta)p(\theta).
\end{equation}
The posterior predictive distribution for a future query order is then
\begin{equation}
p(Y\mid x,D) = \int p(Y\mid x,\theta)p(\theta\mid D)d\theta.
\end{equation}
For an open order query, the structure is identical with an added conditioning on $Y>C-d$. We drop this conditioning term for notational simplicity.

This posterior predictive distribution is the ideal target for probabilistic lead time forecasting: it incorporates completed orders, uses open orders through their censoring information, and accounts for uncertainty over the task-specific lead time process. However, exact posterior inference is generally intractable and would need to be repeatedly approximated for each new dataset. LT-ICL therefore learns this mapping directly. It replaces task-specific inference with a single pretrained in-context model that maps a censored context dataset and query covariates to a predictive lead time distribution  $q_{\phi}(Y \mid x, D) \approx p(Y \mid x, D)$.

\subsection{LT-ICL Architecture}
\label{subsec:architecture}
LT-ICL parameterizes the predictive distribution $q_{\phi}$ using an encoder-only transformer followed by a conditional normalizing-flow head. The model receives a context set of historical orders and a query set of orders to be forecast. The context set contains completed and open orders $D=\{(x_i,T_i,\delta_i)\}_{i=1}^{n}$, where $T_i$ is either the realized lead time or the elapsed censored duration. The query set contains open and future orders, represented by covariates $x_j$. 

Each context or query observation is converted into a token embedding. Numerical covariates are mapped to the model dimension through learned projection layers, while categorical covariates are mapped through learned embedding tables with fixed vocabularies. The response information is embedded separately. For completed context orders, the response embedding contains the observed lead time and event indicator $(T_i,\delta_i=1)$. For censored context orders, it contains the elapsed duration and event indicator $(T_i,\delta_i=0)$. For query orders, the response value and event indicator are masked. The final token representation is obtained by summing the covariate, response, censoring, and query-type embeddings.

The token sequence is processed by a transformer encoder with a PFN-style attention mask \cite{müller2024transformersbayesianinference}. Let $a_{ij}$ indicate whether token $i$ is permitted to attend to token $j$. We use the mask
\begin{equation}
a_{ij}
=
\begin{cases}
1, & \text{if token } j \text{ is a context token},\\
1, & \text{if } i=j,\\
0, & \text{otherwise}.
\end{cases}
\end{equation}
Context tokens attend to other context tokens, allowing the model to infer task-specific structure from the observed dataset. Query tokens attend to the context set and to their own covariate representation, but not to the responses of other query tokens. This prevents leakage across query observations while allowing each forecast to condition on the full censored history. 

LT-ICL uses a conditional normalizing-flow head to parameterize a flexible predictive density over lead times. The flow maps a simple base random variable $Z\sim p_0$ to a lead time variable $Y$ through a series of invertible transformations
\begin{equation}
Y = f_\phi(Z;x, D) = \left (f_\phi^{(K)} \circ \cdots \circ f_\phi^{(1)} \right)(Z;x, D).
\end{equation}
The corresponding density is evaluated using the change-of-variables formula,
\begin{equation}
q_{\phi}(Y\mid x, D) = p_0(f_{\phi}^{-1}(Y;x, D))
\left| \frac{\partial f_{\phi}^{-1}(Y;x, D)}{\partial Y} \right|.
\end{equation}
In our implementation, the flow uses a series of affine coupling and rational-quadratic spline transformations, allowing the predictive distribution to capture skewness and heavy tails commonly observed in supplier lead times. Since lead times are positive, the flow is parameterized on a transformed positive support and mapped back to the original lead time scale for evaluation and forecasting. \autoref{fig:transformer-flow-head} presents an overview of the proposed LT-ICL architecture. 

\begin{figure*}[t]
    \centering
    \includegraphics[width=0.85\linewidth]{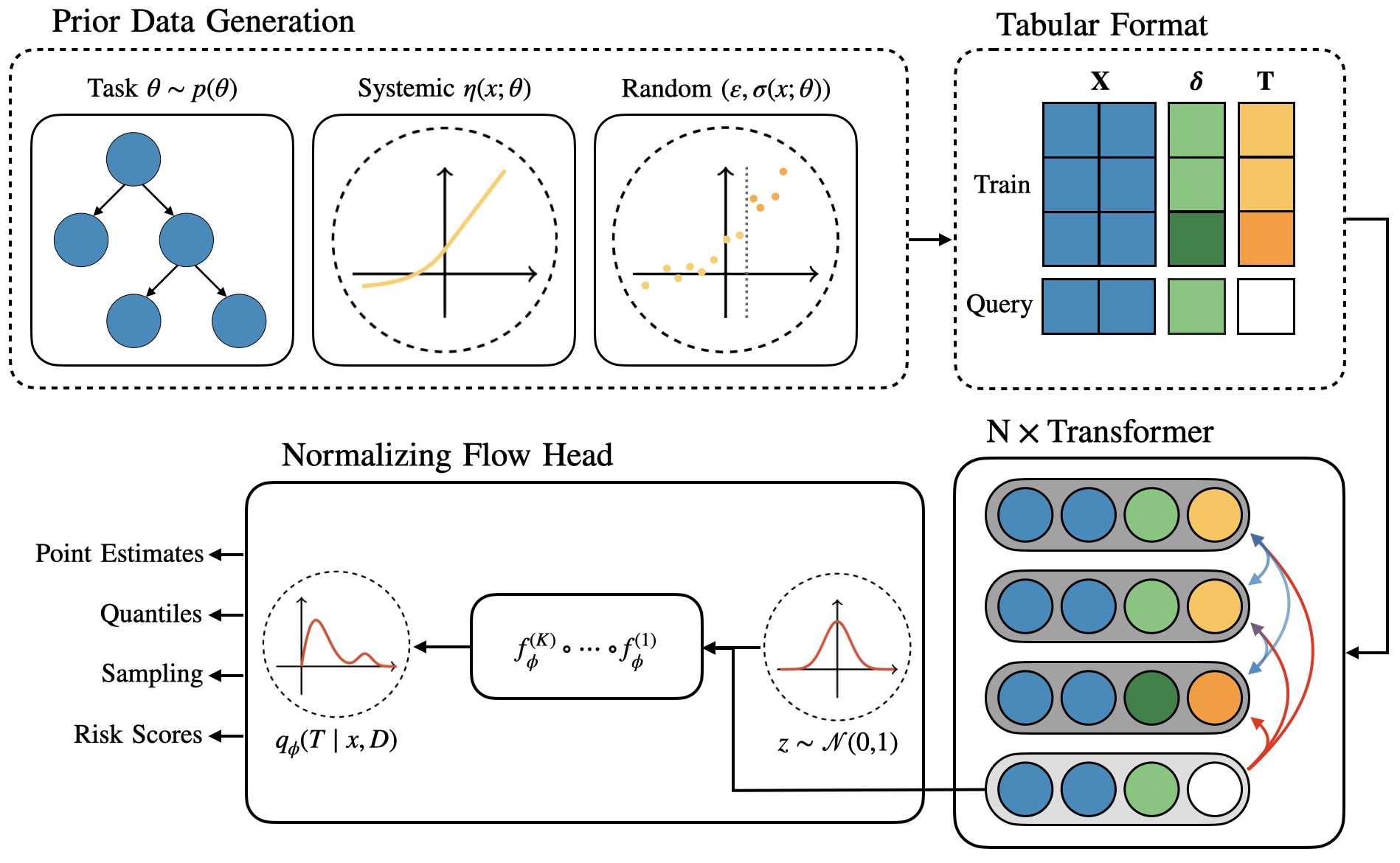}
    \caption{Architecture of LT-ICL for censoring-aware lead time forecasting. Arrows illustrate attention-based information flow between training-context and query tokens.}
    \label{fig:transformer-flow-head}
\end{figure*}

\subsection{Training Objective}
LT-ICL is trained on synthetic censored lead time tasks sampled from the prior described in Section~\ref{subsec:synthetic_prior}. Each synthetic task $\theta_i \sim p(\theta)$ defines a data-generating process from which we sample a context set $D_i \sim p(x, Y \mid \theta_i)$ and a query set $\{x_j\}_{j=1}^{N}\sim p(x \mid \theta_i)$. The context set contains completed and right-censored orders observed at the forecasting cutoff. The query set contains orders for which the model must produce predictive lead time distributions. 

Although the model does not observe $\theta_i$, minimizing the expected negative log likelihood over tasks, contexts, and query responses trains $q_\phi$ to approximate the prior-induced posterior predictive distribution $p(Y\mid x,D)$ \cite{nagler2023statisticalfoundationspriordatafitted}. For each synthetic task $i$, query order $j$, and Monte Carlo response sample $k$, let $Y_{ijk} \sim p(Y \mid x_{ij}, \theta_i)$ denote a sampled lead time from the corresponding target predictive distribution. The distributional training loss is then
\begin{equation}
\mathcal{L}_{\mathrm{NLL}}(\phi) = \frac{1}{M}\sum_{i=1}^{M}\frac{1}{N}\sum_{j=1}^{N}\frac{1}{O}\sum_{k=1}^{O}-\log q_\phi(Y_{ijk}\mid x_{ij},D_{i}).
\end{equation}
where $M$ is the number of sampled tasks, $N$ is the number of query orders per task, $O$ is the number of response samples per query order. 

When $O=1$, this reduces to the prior-data negative log-likelihood objective used in PFN training \cite{müller2024transformersbayesianinference}. In expectation under the synthetic task prior, this objective minimizes the KL divergence between the prior-induced posterior predictive distribution and the amortized predictive distribution \cite{nagler2023statisticalfoundationspriordatafitted}. When $O>1$, the objective averages over multiple plausible lead times for the same query order, giving a lower-variance Monte Carlo estimate of the cross entropy between the synthetic posterior predictive distribution and the model prediction. 

Although likelihood training targets the full predictive distribution, generally point estimates are required by MRP production systems. We therefore add an auxiliary forecasting loss given by the $L_{1}$ distance on the predictive median. Let $\hat m_\phi(x,D) = F_{q_\phi}^{-1}(0.5 \mid x,D)$ denote the median implied by the normalizing flow head, and let $m_\theta(x)$ denote the true median lead time under synthetic task $p(Y\mid x; \theta_i)$. The forecasting loss is 
\begin{equation}
\mathcal{L}_{\mathrm{Point}}(\phi) = \frac{1}{M}
\sum_{i=1}^{M}\frac{1}{N}\sum_{j=1}^{N}\left \vert\hat m_\phi(x_{ij},D_i) - m_{\theta}(x_{ij}) \right\vert.
\end{equation}
The final objective combines the distributional and forecasting terms:
\begin{equation}
\mathcal{L}(\phi) = \mathcal{L}_{\mathrm{NLL}}(\phi) + \mathcal{L}_{\mathrm{Point}}(\phi).
\end{equation}

\subsection{Prior Data Generation}\label{subsec:synthetic_prior}
Our desiderata for a prior data generation pipeline are that: it mimics characteristics of real supplier lead time data, it is diverse so as to emulate a variety of industries, and it is computationally fast to execute. We decompose the generative process into three components: a \emph{task structure} $\theta$ that defines the structural skeleton of the dataset, a \emph{systematic component} $\eta(x;\theta)$ that deterministically maps covariates to a location parameter, and a \emph{random component} $(\varepsilon, \sigma(x;\theta))$ that introduces stochastic variation. This decomposition enables our prior to flexibly model a variety of conditions. Moreover, it lets the prior compose simple building blocks into a rich range of prediction tasks and enables finer control over prior complexity. 

\subsubsection{Task Structure}
A task $\theta$ can be viewed as a simulated supply chain environment that defines the parts, sites, and suppliers involved, their frequencies of occurrence, the observation window, and the nature of the underlying stochasticity. To mimic the sparsity of real order histories, valid part--site--supplier triples are generated from Zipf-weighted marginal distributions rather than sampled uniformly. This creates long-tailed categorical frequencies in which a small number of groups appear frequently while many combinations are rare. A cutoff date $C$ is then sampled inside the observation window, inducing the same censoring mechanism found in operational settings. Finally, a random family $\varepsilon \in \{\mathrm{AFT}, \mathrm{PH}\}$ is sampled to determine whether the task follows an accelerated failure time or proportional hazards model. Algorithm \ref{alg:task} provides an overview of the task generation process. 

\input{algorithms/task}

\subsubsection{Systematic Component}
The systematic component $\eta(x; \theta)$ deterministically maps the covariates of an observation $x_{i}$ to a scalar which serves as the location parameter. Let $\mathcal{J} = \{\text{part, site, supplier, group}\}$ be the set of hierarchy levels, $g_{j}(x)$ is the group identity of $x$ at level $j \in \mathcal{J}$, $x^{\mathrm{order}} \in \mathbb{R}^d$ be the order-level covariates, and $x^{\mathrm{time}}$ be time-based covariates. Then the systematic component is given by 
\begin{dmath}
\eta(x; \theta) = \mu_{0} + \sum_{j \in \mathcal{J}} \left(\boldsymbol{\beta}_{g_{j}(x)}^{\top}\phi_{g_{j}(x)}\left(x^{\mathrm{order}}\right) 
\quad +\sum_{m\in g_{j}(i)}\alpha_{g_{j}(x)}^{(m)}K_{g_{j}(x)}\left(x^{\mathrm{time}}, x^{\mathrm{time}}_{m}\right)\right) ,
\end{dmath}
where $\mu_{0}$ is a global offset, $\boldsymbol{\beta}_{g_{j}(x)}$ and $\boldsymbol{\alpha}_{g_{j}(x)}$ are group-specific coefficients, and $\phi_{g_{j}(x)}(\cdot)$ and $K_{g_{j}(x)}(\cdot, \cdot)$ are the group-specific basis and kernel transformations. 
 
Therefore, $\eta(x;\theta)$ can model both parametric and nonparametric effects, providing a rich class of learnable mechanisms. Furthermore, the latent group effects introduce correlated behavior and allows covariate effects to vary in different procurement contexts. For example, an increase in order quantity may have little effect for one supplier but substantially increase lead times for another. Algorithm \ref{alg:systematic} provides an overview of the systematic data generation process. 

\input{algorithms/systemic}

\subsubsection{Random Component and Observation Process}
The random component consists of a stochastic family $\varepsilon$ and a covariate-dependent scale $\sigma(x;\theta)$. The first term, $\varepsilon \in \{AFT, PH\}$, specifies the form of stochasticity in the model. The second term $\sigma(x;\theta)$ introduces group-level heteroskedasticity that is inferrable from context. 

In an accelerated failure time (AFT) specification, randomness enters additively in the logarithm of the lead time. In contrast, under a proportional hazards (PH) specification, randomness is introduced through the hazard function, where $\eta(x; \theta)$ acts multiplicatively on the hazard rate. The survival function of our prior is given by 
\begin{equation}
S(T \mid x; \theta) =
\begin{cases}
S_{\mathrm{AFT}}\left(\frac{\log T - \eta(x; \theta)}{\sigma(x;\theta)}\right), & \text{if } \varepsilon = \mathrm{AFT}, \\
S_{\mathrm{PH}}(T; \sigma(x; \theta))^{\exp(\eta(x; \theta))}, & \text{if } \varepsilon = \mathrm{PH}.
\end{cases}
\end{equation}
where $S_{\mathrm{AFT}}(T \mid x; \theta)$ and $S_{\mathrm{PH}}(T \mid x; \theta)$ are the survival functions defined by $f_\varepsilon$. We use inverse transform sampling with the survival function to generate samples. For an overview of the random component generation, see Algorithm \ref{alg:random}. 

\input{algorithms/random}

\subsection{CRPS Risk Decomposition}\label{subsec:risk_decomposition}
The prior generator and amortized training objective play distinct roles in LT-ICL. The prior determines the family of lead time processes over which the model is trained, while the neural model determines how accurately the corresponding posterior predictive distribution can be approximated. We derive an excess forecasting risk decomposition based on the probabilistic forecasting metric Continuous Ranked Probability Score \cite{waghmare2026properscoringrulesestimation}. We show that LT-ICL forecasting error is controlled by two terms: prior misspecification, determined by the realism of the synthetic task prior, and amortized approximation, determined by the model’s ability to approximate the prior-induced posterior predictive distribution.

Let $p^*(Y\mid x,D)$ denote the true predictive distribution of a lead time $Y$ for query covariates $x$ given context set $D$. Let
\begin{equation}
\pi(Y\mid x,D) = \int p(Y\mid x,\theta)p(\theta\mid D)d\theta
\end{equation}
denote the posterior predictive distribution induced by the synthetic task prior $p(\theta)$, and let $q_\phi(Y\mid x,D)$ denote the amortized predictive distribution learned by LT-ICL. The excess continous ranked probablisty score (CRPS) risk is defined as
\begin{equation}
\mathcal{E} = \mathbb{E}_{T\sim p^{*}(Y\mid x, D)}\left[\mathrm{CRPS}(q_{\phi},Y) - \mathrm{CRPS}(p^{*},Y)\right]
\end{equation}
where the expectation is taken under the true data-generating distribution.

\begin{proposition}[Forecast Risk Decomposition] \label{prop:decomp}
Assume lead times are bounded, $Y\in[0,\bar{Y}]$, and the relevant KL divergences
are finite. Then
\begin{equation}
\mathcal{E} \leq \bar{Y}
\left(
\underbrace{\mathrm{KL}(p^{*}\Vert \pi)}_{\mathrm{misspecification}}
+
\underbrace{\mathrm{KL}(\pi\Vert q_{\phi})}_{\mathrm{approximation}}
\right).
\end{equation}
\end{proposition}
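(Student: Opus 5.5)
The plan is to use the fact that CRPS is a strictly proper scoring rule whose excess risk has a closed form. From there, the CDF discrepancy can be bounded by total variation and then by KL divergence via Pinsker's inequality. Throughout, $x$ and $D$ are fixed and we write $F^*$, $F_\pi$, $F_q$ for the CDFs of $p^*(\cdot\mid x,D)$, $\pi(\cdot\mid x,D)$ and $q_\phi(\cdot\mid x,D)$. We assume, or truncate so that, all three are supported on $[0,\bar Y]$.

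\textbf{Step 1 (excess CRPS as a Cram\'er distance).} We use the representation $\mathrm{CRPS}(F,y)=\int_0^{\bar Y}(F(t)-\mathbb{I}\{y\le t\})^2\,dt$. Taking the expectation over $Y\sim p^*$ and expanding the square, the cross terms give $\mathbb{E}[\mathbb{I}\{Y\le t\}]=F^*(t)$. This yields the standard identity
\begin{equation*}
\mathcal{E}=\int_0^{\bar Y}\bigl(F_q(t)-F^*(t)\bigr)^2\,dt .
\end{equation*}
This is routine, and the interchange of integrals is justified by Fubini since the integrand is bounded on a bounded interval.

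\textbf{Step 2 (reduce to total variation).} For every $t$, $|F_q(t)-F^*(t)|=|q_\phi((0,t])-p^*((0,t])|\le \mathrm{TV}(q_\phi,p^*)$. Hence $\mathcal{E}\le \bar Y\,\mathrm{TV}(q_\phi,p^*)^2$. By the triangle inequality and $(a+b)^2\le 2a^2+2b^2$,
\begin{equation*}
\mathcal{E}\le 2\bar Y\bigl(\mathrm{TV}(p^*,\pi)^2+\mathrm{TV}(\pi,q_\phi)^2\bigr).
\end{equation*}

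\textbf{Step 3 (Pinsker).} Pinsker's inequality gives $\mathrm{TV}(P,Q)^2\le \tfrac12\mathrm{KL}(P\Vert Q)$ for either ordering of the arguments, since TV is symmetric. Applying it with $\mathrm{KL}(p^*\Vert\pi)$ for the first term and $\mathrm{KL}(\pi\Vert q_\phi)$ for the second, the factor $2\cdot\tfrac12$ cancels and gives exactly the bound stated in Proposition~\ref{prop:decomp}. Finiteness of the KL terms is only needed so that the right-hand side is meaningful. If the statement is read with an outer expectation over $(x,D)$, we integrate the pointwise bound, and the KL terms become conditional (expected) KL divergences.

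\textbf{Main obstacle.} The delicate point is Step 1 together with the choice of inequality in Step 2. We must check that the CRPS excess really equals the integrated squared CDF difference for the bounded-support version being used. We must also avoid the looser bound $(F_q-F^*)^2\le|F_q-F^*|$, which would give only a $\sqrt{\mathrm{KL}}$ rate. Keeping the square and bounding it by $\mathrm{TV}^2$ is what makes the constant come out as exactly $\bar Y$. The remaining bookkeeping concerns the support: if $q_\phi$ places mass outside $[0,\bar Y]$, we first note that clipping predictions to $[0,\bar Y]$ cannot increase CRPS when $Y\in[0,\bar Y]$, and only then apply the argument.
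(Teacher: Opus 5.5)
Your argument is correct under your standing assumption that $p^*$, $\pi$ and $q_\phi$ all live on $[0,\bar Y]$, and it follows the paper's proof: the Cram\'er-distance identity for excess CRPS, a pointwise total-variation bound on CDF gaps, $(a+b)^2\le 2a^2+2b^2$, then Pinsker. The only difference is the order of two steps. You bound $|F_q-F^*|$ by $\mathrm{TV}(q_\phi,p^*)$ and then apply the triangle inequality for TV, whereas the paper inserts $F_\pi$ pointwise inside the integral and bounds each piece by TV separately.

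One correction concerns your closing remark on clipping. Write $\tilde q$ for $q_\phi$ clipped to $[0,\bar Y]$. Clipping \emph{lowers} CRPS, so it gives $\mathcal{E}(\tilde q)\le\mathcal{E}(q_\phi)$, which is the wrong direction: a bound on the clipped forecast says nothing about $q_\phi$ itself.

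With the paper's $\int_0^\infty$ CRPS, the bound genuinely fails if $q_\phi$ puts mass above $\bar Y$. Take $p^*=\pi=\mathrm{Unif}[0,\bar Y]$ and $q_\phi=(1-\epsilon)\pi+\epsilon\,\mathrm{Unif}[M,M+1]$ with $M>\bar Y$. Then $\mathcal{E}\ge\epsilon^2(M-\bar Y)$, while the right-hand side is only $\bar Y\log\frac{1}{1-\epsilon}$, so the bound breaks for large $M$.

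So the support assumption on $q_\phi$ cannot be removed by clipping. The paper's first line, $\mathcal{E}=\int_0^{\bar Y}(F_q-F_p)^2\,dt$, also takes that assumption for granted. Alternatively, if CRPS is defined by the truncated integral over $[0,\bar Y]$, as in your Step 1, the argument needs no support assumption on $q_\phi$ or $\pi$ at all, and clipping is unnecessary.
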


We present the proof for Proposition \ref{prop:decomp} below:
\begin{proof}
\begin{align*}
\mathcal{E} &=  \int_{0}^{\bar{Y}} (F_{q}(t) - F_{p}(t))^{2}dt  \\
&=  \int_{0}^{\bar{Y}} (F_{q}(t) -  F_{\pi}(t) + F_{\pi}(t) - F_{p}(t))^{2}dt  \\
&\leq  \int_{0}^{\bar{Y}} 2(F_{q}(t) -  F_{\pi}(t))^{2}dt +  \int_{0}^{\bar{Y}}2(F_{\pi}(t) - F_{p}(t))^{2}dt \\
&\leq  \int_{0}^{\bar{Y}} 2(\mathrm{TV}(q_{\phi}, \pi))^{2}dt +  \int_{0}^{\bar{Y}}2(\mathrm{TV}(\pi, p^{*}))^{2}dt \\
&=  2\bar{Y}(\mathrm{TV}(q_{\phi}, \pi))^{2} + 2\bar{Y}(\mathrm{TV}(\pi, p^{*}))^{2} \\
&\leq \bar{Y} \left(\mathrm{KL}( \pi \Vert q_{\phi}) + \mathrm{KL}( p^{*} \Vert \pi) \right)  
\end{align*}
\end{proof}
where the first line follows from the divergence induced by CRPS \cite{waghmare2026properscoringrulesestimation}, the fourth line follows from the definition of total variation distance and the final line follows from Pinsker's inequality.  

The first term is a prior misspecification error. It measures the discrepancy between the true lead time predictive distribution and the posterior predictive distribution induced by the synthetic prior. Reducing this term requires the prior generator to cover the structural features of industrial lead time data. The second term is an amortized approximation error. It measures how closely LT-ICL approximates the posterior predictive distribution induced by the synthetic prior. This is the term targeted during PFN pretraining: under the synthetic task distribution, the expected negative log-likelihood is minimized when $q_\phi(T\mid x,D)=\pi(T\mid x,D)$. This decomposition identifies two levers for improving LT-ICL. The synthetic prior controls the misspecification term, while amortized training controls the approximation term. Thus, LT-ICL’s forecasting performance depends jointly on the realism of the prior data generator and the accuracy of amortized posterior predictive inference.

%% file: algorithms/task.tex
\begin{algorithm}[t]
\caption{Task Structure $\theta$}\label{alg:task}
\KwOut{Task parameters $\theta = \bigl(\mathcal{G},\, \{x_i\}_{i=1}^N,\, C,\, \varepsilon,\, f_\varepsilon\bigr)$}
\tcp{Sample group cardinalities}
$n_{\mathrm{part}},\, n_{\mathrm{site}},\, n_{\mathrm{supplier}}, \, n_{\mathrm{groups}} \sim \mathrm{Uniform}(\mathbb{Z})$\; 

\tcp{Assign Zipf-weighted marginals}
\ForEach{$j \in \{\mathrm{part},\, \mathrm{site},\, \mathrm{supplier}\}$}{
    $\alpha_j \sim \mathrm{Beta}(0.5,\, 0.5)$\; \\
    $p_j(k) \propto \mathrm{rank}_k^{-\alpha_j}$\;
}
\tcp{Build sparse set of valid triplets}
$\mathcal{G} \gets \emptyset$\;\\
\While{$|\mathcal{G}| < n_{\mathrm{groups}}$}{
    $(r_{\mathrm{part}},\, r_{\mathrm{site}},\, r_{\mathrm{supplier}}) \sim p_{\mathrm{part}} \times p_{\mathrm{site}} \times p_{\mathrm{supplier}}$\; \\
    $\mathcal{G} \gets \mathcal{G} \cup \{(r_{\mathrm{part}},\, r_{\mathrm{site}},\, r_{\mathrm{supplier}})\}$\;
}
\tcp{Sample temporal window and cutoff}
$t_0,\, \Delta \sim \mathrm{Uniform}(\mathbb{R})$\;
$C \sim \mathrm{Uniform}(t_0,\, t_0 + \Delta)$\; 

\tcp{Sample error distribution}
$\varepsilon \sim \mathrm{Uniform}\{\mathrm{AFT},\, \mathrm{PH}\}$\;

\eIf{$\varepsilon = \mathrm{AFT}$}{
    $f_\varepsilon \sim \mathrm{ErrorBank}(\text{Gaussian, logistic, Gumbel,}  \dots)$\;
}{
    $f_\varepsilon \sim \mathrm{HazardBank}(\text{Weibull, log-logistic, Gompertz,} \dots)$\;
}
\tcp{Construct observation features}
\ForEach{observation $i = 1,\, \dotsc,\, N$}{
    $x_i \sim \mathrm{Categorical}(\mathcal{G})\times \mathrm{Uniform}(t_0,\, t_0 + \Delta)$\;
}
\Return{$\theta = \bigl(\mathcal{G},\, \{x_i\}_{i=1}^N,\, C,\, \varepsilon,\, f_\varepsilon\bigr)$}
\end{algorithm}

%% file: algorithms/systemic.tex
\begin{algorithm}[t]
\caption{Systematic Component $\eta(x;\,\theta)$}\label{alg:systematic}
\KwIn{Task structure $\theta$}
\KwOut{Linear predictors $\{\eta_i\}_{i=1}^N$}
\tcp{Sample global offset}
$\mu_0 \sim \mathrm{Uniform}(\mathbb{R})$\;

\tcp{Sample group-specific transforms}
\ForEach{level $j \in \mathcal{J}$ and identity $g \in \mathcal{G}_j$}{
    $\phi_g \sim \mathrm{TransformBank}(\text{sigmoid, step, ReLU, RBF, \dots})$\;
    
    $K_g \sim \mathrm{KernelBank}(\text{periodic, RBF, Mat\'{e}rn, \dots})$\;
    
    $\boldsymbol{\beta}_g  \sim \mathcal{N}(\mu_g,\, \sigma_g)$\;

}
\tcp{Compute linear predictor}
\ForEach{observation $i = 1,\, \dotsc,\, N$}{
    $\boldsymbol{\alpha}_{g_j(x_i)} \sim \mathcal{N}(\mathbf{0},\, K_{g_j(x_i)}^{-1}(x^{\mathrm{time}}_i; g_j))$\;
    
    $\displaystyle \eta_i \gets \mu_0 + \sum_{j \in \mathcal{J}} \left( \boldsymbol{\beta}_{g_j(x_i)}^\top \phi_{g_j(x_i)}(x_i^{\mathrm{order}}) \right)$\;

    $\displaystyle \eta_i \gets \eta_i + \sum_{j \in \mathcal{J}} \left( \boldsymbol{\alpha}_{g_j(x_i)}^\top K_{g_j(x_i)}(x_i; g_j) \right)$\;
}
\Return{$\{\eta_i\}_{i=1}^N$}
\end{algorithm}

%% file: algorithms/random.tex
\begin{algorithm}[t]
\caption{Random Component and Observation Process}\label{alg:random}
\KwIn{Task structure $\theta$, linear predictors $\{\eta_i\}_{i=1}^N$}
\KwOut{Context set $D$, query set $Q$}
\tcp{Sample group-specific scales}
\ForEach{level $j \in \mathcal{J}$ and identity $g \in \mathcal{G}_j$}{
    $\sigma_g \sim \mathrm{Uniform}(\mathbb{R}^+)$\;
}
\tcp{Sample lead times}
\ForEach{observation $i = 1,\, \dotsc,\, N$}{
    $\displaystyle \sigma_i \gets \sum_{j \in \mathcal{J}} \sigma_{g_j(x_i)}$\;
    
    $U_i \sim \mathrm{Uniform}(0,\,1)$\;
    
    $Y_i \gets S^{-1}(U_i \mid \eta_i,\, \sigma_i,\, \theta)$\;
}
\tcp{Apply censoring and split}
$D \gets \emptyset, \quad Q \gets \emptyset$\;

\ForEach{observation $i$ with $d_i < C$}{
    \eIf{$d_i + Y_i < C$}{
        $D \gets D \cup \{(x_i,\, Y_i,\, \delta_i = 1)\}$\;
    }{
        $T_i \gets C - d_i $
        
        $D \gets D \cup \{(x_i,\, T_i,\, \delta_i = 0)\}$\;
        
        $Y_i' \gets S^{-1}(U_i' \mid Y_i > C - d_i,\, \eta_i,\, \sigma_i,\, \theta)$\;
        
        $m_i \gets S^{-1}(0.5 \mid \eta_i,\, \sigma_i,\, \theta)$\;
        
        $Q \gets Q \cup \{(x_i,\, Y_i',\, m_i)\}$\;
    }
}
\tcp{Future orders handled analogously}
\Return{$(D,\, Q)$ } 
\end{algorithm}

%% file: _04_case_study.tex
\section{Case Study}\label{sec:case_study}
\subsection{Dataset Details}
Historical supplier lead time datasets were extracted from an advanced sales and operations planning platform. These datasets span seven industries (Pharmaceutical (7), Electronics (5), Manufacturing (4), Medical Devices (3), Chemical (3),  Semiconductor (1), and Automotive (1)), resulting in a diverse benchmark. Each dataset consists of real procurement records linking ordered parts to supplier and site information, together with the order and receipt dates required to measure realized lead times. Orders that remained incomplete at the time of data extraction were excluded and right-censoring was introduced retrospectively through the experimental design to evaluate forecasting performance under partially observed lead time histories. Customer identities are anonymized in accordance with existing agreements.

\subsection{Data Preprocessing}
Part-site-supplier combinations with fewer than 25 observations were removed, in a manner similar to prior case studies \cite{Brintrup02062020, STEINBERG2023100003}. Only variables available for all participating customers were retained, namely $\texttt{part id}$, $\texttt{site id}$, $\texttt{supplier id}$, $\texttt{order id}$, $\texttt{order date}$, $\texttt{receipt date}$, and $\texttt{order quantity}$. This restriction improves comparability across datasets and removes customer-specific attributes that could reveal the identity of participating firms. Duplicate records and records with missing values in the harmonized schema were excluded from analysis following \cite{Brintrup02062020, STEINBERG2023100003}. Finally, to preserve confidentiality, part, site, and supplier identifiers were anonymized, and each dataset was subsampled to similar sizes to further protect privacy. \autoref{tab:dataset_summary} reports summary statistics for the anonymized datasets. 

\input{tables/summary}

\subsection{Benchmark Methods}
We evaluate LT-ICL against popular machine learning, deep learning, and in-context learning methods. All models were presented with the same feature transformations, with categorical variables encoded using one-hot encodings, calendar dates mapped to harmonic Fourier features to capture periodic structure, a running index to capture trend, and order quantities $q_i$ standardized ($\tilde{q}_i = \frac{q_i - \mu_q}{\sigma_q}$) to obtain a unit-free measure comparable across datasets. Whenever possible, we compare LT-ICL against both regression and survival variants of comparative models.\\

\noindent \textbf{Tree Ensembles.}
We include random forests (RF) and extra trees (ET), which are commonly used in lead time estimation and related supply-chain prediction tasks \cite{BARROS2023106671,pom,CAMUR2024123226}. We also include their survival counterparts, random survival forests (RSF) \cite{Ishwaran_2008} and extra survival trees (EST), which construct ensembles from censoring-aware survival trees \cite{Leblanc01061993}.\\

\noindent \textbf{Boosted Trees.}
We include XGBoost \cite{Chen_2016}, which has been applied to delivery-date prediction in manufacturing and construction settings \cite{11362374,s10845-023-02290-2}. In addition to standard XGBoost regression (XGB-Reg), we evaluate XGBoost with accelerated-failure-time and proportional-hazards objectives, denoted XGB-AFT and XGB-PH \cite{Barnwal02102022}. These variants allow the same boosted-tree family to serve as either a point forecaster or a censoring-aware probabilistic forecaster.\\

\noindent \textbf{Tabular Neural Models.}
We include TabM  \cite{gorishniy2025tabmadvancingtabulardeep}, a parameter efficient ensemble of multilayer perceptrons as a stand-in for classic MLPs \cite{Schneckenreither03062021} (TabM-Reg). To obtain deep censoring-aware baselines, we also train TabM-based accelerated-failure-time (TabM-AFT) and proportional-hazards models (TabM-PH), which use the DeepAFT \cite{norman2024deepaft} and DeepSurv \cite{Katzman_2018} prediction heads, respectively.\\

\noindent \textbf{Recurrent Neural Models.}
We include Long Short-Term Memory networks as a commonly used neural baseline (LSTM-Reg) \cite{gabellini,11468209}, along with survival variants based on accelerated failure time and proportional hazards objectives, denoted LSTM-AFT and LSTM-PH, respectively. For orders that occur at the same time step, we impose a random ordering to construct input sequences.\\

\noindent \textbf{Transformer-Based Models.}
We evaluate task-specific Transformer architectures \cite{vaswani2023attentionneed}. Prior work in this setting can be interpreted as an accelerated failure time model with a log-normal likelihood \cite{MATTHES2025807}, which we denote Transformer-AFT; however, this model does not incorporate censoring information. We therefore complement it with a regression Transformer (Transformer-Reg) and a proportional hazards Transformer (Transformer-PH), incorporating censoring information wherever the objective allows.\\

\noindent \textbf{Tabular Foundation Models.}
We compare against contemporary in-context tabular foundation models: TabPFNv2 \cite{Hollmann2025}, TabICLv2 \cite{qu2026tabiclv2betterfasterscalable}, and Mitra \cite{zhang2025mitramixedsyntheticpriors}. These methods are transformer models pretrained on synthetic tabular tasks and adapt to new datasets through in-context prediction, but they are not designed to directly exploit right-censored lead time observations.

\subsection{Performance Metrics}
All models were evaluated on point-prediction performance, reflecting the lead time inputs commonly required by MRP systems. Point-prediction accuracy was assessed using weighted mean absolute percentage error (WMAPE), a widely used scale-free forecasting metric for lead time estimation \cite{BARROS2023106671, s10845-023-02290-2}:
\begin{equation}
\mathrm{WMAPE} = \frac{\sum_{i=1}^{N} \left \vert y_i - \hat{y}_i \right \vert}{\sum_{i=1}^{N} \left \vert  y_i \right \vert}.
\end{equation}
WMAPE measures aggregate absolute deviation relative to total observed lead time, yielding an interpretable percentage error that is comparable across datasets with different lead time scales. We report WMAPE for the overall test set, as well as separately for open and future orders.

For methods that estimate a predictive distribution, we evaluate probabilistic accuracy using scaled continuous ranked probability score (SCRPS), a probabilistic forecasting metric \cite{potosnak2026forkingsequences}. Writing $F_i$ for the predictive cumulative distribution function for instance $i$, SCRPS is defined as
\begin{equation}
\mathrm{SCRPS} =
\frac{\sum_{i=1}^{N}
\int_{0}^{\infty} \left(F_i(z) - \mathbb{I}\{z \geq y_i\}\right)^2  dz
}{
\sum_{i=1}^{N} \vert y_i\vert
}.
\end{equation}
The numerator is the aggregate CRPS over the test set, while the denominator normalizes by total observed lead time to obtain a scale-free metric. This metric is related to Proposition~\ref{prop:decomp}, which analyzes excess CRPS risk, while also enabling comparison across datasets with different lead time scales. When the predictive distribution is a point mass at $\hat y_i$, the CRPS for instance $i$ reduces to $\vert y_i-\hat y_i\vert$; therefore, SCRPS reduces to WMAPE for deterministic point forecasts. SCRPS  is reported for the overall test set, as well as separately for open and future orders.

Because the evaluation spans 24 datasets, 20 models, and 3 evaluation settings, we summarize performance using aggregate statistics across datasets rather than reporting dataset-level results. For each method, we report the average error metric, either WMAPE or SCRPS, the average rank relative to competing methods, the total number of wins (the number of datasets on which the method achieves the lowest error), and the average pairwise win rate, (the proportion of model--dataset comparisons in which the method outperforms a competing method). Together, these statistics capture both overall accuracy and consistency of performance across heterogeneous lead time forecasting settings.

\subsection{Implementation Details}
For evaluation, each dataset was partitioned into disjoint validation folds. A forecast cutoff date was selected and used to temporally split observations into train and test sets for each fold. Orders placed before the chosen cutoff but not yet received by the cutoff are treated as right-censored. This design prevents leakage from future orders into past predictions, respecting the chronological order of procurement events. Each benchmark model's hyperparameters were tuned independently per fold per dataset using grid search (an overview of hyperparameters can be found in \autoref{tab:hyperparameters}). Total execution time for supervised learning models is then measured as the combined runtime of hyperparameter tuning, training and inference. In contrast, the runtime for in-context learning methods is measured only as inference time, which performs the same function as the previous three stages. WMAPE and SCRPS calculations are averaged across folds per dataset to establish a stable performance estimate.  

\input{tables/hyperparameters}

To ensure a fair runtime comparison, all benchmark models were trained and evaluated on the same machine equipped with an Intel(R) Xeon(R) Platinum 8481C CPU at 2.70GHz and four NVIDIA H100 GPUs. GPU acceleration was used whenever applicable. The LT-ICL model was pretrained once for approximately nine hours using data parallelism across the four GPUs. Training consisted of 35,000 gradient steps with a per-GPU batch size of 512, corresponding to more than 70 million synthetic lead time datasets. Because of the computational cost of pretraining, no hyperparameter tuning was performed for LT-ICL. The final model contains 26 million parameters and was trained using the Schedule-Free AdamW optimizer with learning rate $\eta = 10^{-5}$ \cite{defazio2024roadscheduled}.

%% file: tables/summary.tex
\begin{table}[htbp]
\centering
\caption{Summary statistics of anonymized datasets. LT denotes mean lead time in days with standard deviation.}
\label{tab:dataset_summary}
\begin{tabular}{ll r c c}
\toprule
\textbf{Name} & \textbf{Industry} & $n$ & \textbf{LT (days)} & \textbf{Date Range} \\
\midrule
\texttt{A} & Automotive & 10046 & 111.0$\,\pm\,$83.0 & 2022--2026 \\

\texttt{B} & Chemicals & 9977 & 66.2$\,\pm\,$54.8 & 2012--2018 \\
\texttt{C} & Chemicals & 10009 & 85.7$\,\pm\,$72.2 & 2023--2026 \\
\texttt{D} & Chemicals & 10002 & 44.2$\,\pm\,$37.8 & 2018--2026 \\

\texttt{E} & Electronics & 9994 & 89.4$\,\pm\,$45.8 & 2023--2026 \\
\texttt{F} & Electronics & 9997 & 123.2$\,\pm\,$89.0 & 2016--2019 \\
\texttt{G} & Electronics & 9995 & 54.9$\,\pm\,$65.7 & 2022--2025 \\
\texttt{H} & Electronics & 10053 & 93.0$\,\pm\,$87.5 & 2012--2017 \\
\texttt{I} & Electronics & 9975 & 112.9$\,\pm\,$52.3 & 2022--2026 \\

\texttt{J} & Manufacturing & 9977 & 13.0$\,\pm\,$13.0 & 2022--2025 \\
\texttt{K} & Manufacturing & 10009 & 12.7$\,\pm\,$28.6 & 2022--2026 \\
\texttt{L} & Manufacturing & 9987 & 6.3$\,\pm\,$5.0 & 2024--2026 \\
\texttt{M} & Manufacturing & 10020 & 120.1$\,\pm\,$69.7 & 2021--2025 \\

\texttt{N} & Medical Devices & 9992 & 84.4$\,\pm\,$101.0 & 2019--2026 \\
\texttt{O} & Medical Devices & 9981 & 35.3$\,\pm\,$38.5 & 2020--2026 \\
\texttt{P} & Medical Devices & 10005 & 160.6$\,\pm\,$112.1 & 2021--2026 \\

\texttt{Q} & Pharmaceuticals & 10001 & 62.3$\,\pm\,$74.8 & 2021--2026 \\
\texttt{R} & Pharmaceuticals & 9992 & 32.7$\,\pm\,$79.6 & 2023--2026 \\
\texttt{S} & Pharmaceuticals & 9976 & 105.2$\,\pm\,$91.5 & 2021--2025 \\
\texttt{T} & Pharmaceuticals & 10029 & 11.7$\,\pm\,$14.3 & 2022--2025 \\
\texttt{U} & Pharmaceuticals & 10049 & 41.5$\,\pm\,$69.7 & 2018--2026 \\
\texttt{V} & Pharmaceuticals & 10063 & 25.1$\,\pm\,$46.1 & 2009--2026 \\
\texttt{W} & Pharmaceuticals & 10032 & 51.3$\,\pm\,$52.5 & 2016--2026 \\

\texttt{X} & Semiconductor & 10048 & 45.3$\,\pm\,$92.8 & 2022--2026 \\
\bottomrule
\end{tabular}
\end{table}

%% file: tables/hyperparameters.tex
\begin{table}[ht]
\centering
\caption{Hyperparameter search grids by model family. Best configurations are selected per fold by grid search on a validation split.}
\label{tab:hyperparameters}
\scriptsize
\begin{tabular}{lll}
\toprule
\textbf{Model family} & \textbf{Hyperparameter} & \textbf{Search values} \\
\midrule

\multirow{3}{*}{RF, RSF, ET, EST}
& \texttt{n_estimators}       & [100, 200, 500] \\
& \texttt{max_depth}          & [None, 5, 10, 15] \\
& \texttt{min_samples_split} & [5, 10, 20] \\
\midrule

\multirow{3}{*}{XGB-\{Reg, AFT, PH\}}
& \texttt{n_estimators}   & [128, 256, 512] \\
& \texttt{max_depth}      & [3, 6, 9] \\
& \texttt{learning_rate}  & [0.01, 0.05, 0.1] \\
\midrule

\multirow{3}{*}{TabM-\{Reg, AFT, PH\}}
& \texttt{K}  & [8, 16, 32] \\
& \texttt{d_blocks}           & [128, 256, 512] \\
& \texttt{n_blocks}  & [1, 2, 3] \\
\midrule

\multirow{2}{*}{LSTM-\{Reg, AFT, PH\}}
& \texttt{d_layer}           & [64, 128, 256] \\
& \texttt{n_layers}  & [1, 2, 3] \\
\midrule

\multirow{2}{*}{Transformer-\{Reg, AFT, PH\}}
& \texttt{d_model}           & [64, 128, 256] \\
& \texttt{n_blocks}  & [1, 2, 3] \\
\midrule

\multirow{1}{*}{TabICLv2, TabPFNv2, Mitra}
& --- & --- \\
\bottomrule
\end{tabular}
\end{table}

%% file: _05_results.tex
\section{Results and Discussion}\label{sec:results}

\begin{figure*}[t]
    \centering
    \includegraphics[width=\linewidth]{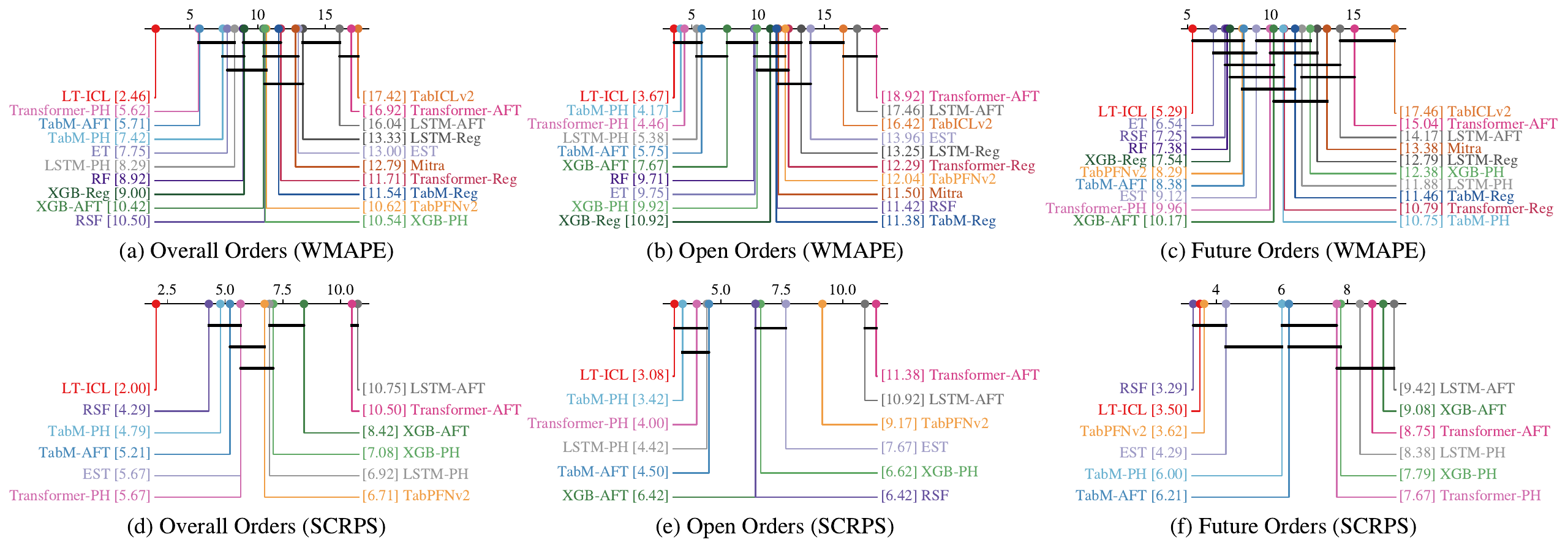}
    \caption{
        Critical difference diagrams comparing ranks across forecasting settings and metrics. Panels (a)--(c) report WMAPE ranks for overall, open-order, and future-order point forecasting, while panels (d)--(f) report SCRPS ranks for probabilistic forecasting. Lower ranks indicate better performance. Horizontal bars indicate groups of methods that are not significantly different under pairwise Wilcoxon signed-rank tests with Benjamini--Hochberg correction.
    }
    \label{fig:cd}
\end{figure*}

\subsection{Aggregate Forecasting Performance}

Across the 24 industrial lead time datasets, LT-ICL provides the strongest aggregate point-forecasting performance. As shown in \autoref{tab:wmape}, LT-ICL achieves the best overall WMAPE rank, the most dataset-level wins, the highest pairwise win rate, and the lowest average WMAPE. This performance is broadly distributed across industries; LT-ICL obtains the lowest WMAPE on 1/1 automotive, 3/3 chemical, 2/5 electronics, 2/4 manufacturing, 3/3 medical, 4/7 pharmaceutical, and 0/1 semiconductor datasets. The strongest competing method is the  Transformer-PH, with the second-lowest rank, win rate and average WMAPE. However, it shares the second-highest number of wins with ET and TabM-PH. Thus, although several baselines are competitive under specific datasets, LT-ICL is the only method that leads across all aggregate point-forecasting criteria.

The probabilistic forecasting results show a similar pattern, but with stronger competition from survival-based baselines. As shown in \autoref{tab:scrps}, LT-ICL obtains the best overall SCRPS rank, the most dataset-level wins, the highest pairwise win rate, and the lowest average SCRPS. The industry-level pattern is also similar to WMAPE, with LT-ICL losing the top position on one of the chemical datasets. RSF is the strongest competing probabilistic baseline by rank, win rate, and average SCRPS, while TabM-PH achieves the second-highest number of dataset-level wins. Notably, Transformer-PH drops its second place position when evaluated as a probabilistic forecaster. These results indicate that LT-ICL provides the strongest aggregate probabilistic performance, while tree-based survival models remain competitive distributional forecasters.

The critical difference diagrams in \autoref{fig:cd} provide a complementary rank-based view. LT-ICL obtains the best average rank in the overall WMAPE and SCRPS comparisons, confirming that its aggregate advantage is not only a consequence of scale differences in error values. Pairwise Wilcoxon signed-rank tests with Benjamini--Hochberg correction show that LT-ICL is statistically separated from many competing methods in the overall forecasting setting for point and probabilistic forecasting. The benchmark contains heterogeneous procurement environments, and the strongest competing baseline changes across metrics, datasets, and industries. LT-ICL is therefore best characterized as the most robust overall forecasting method, rather than as a method that dominates every individual setting.

\begin{table*}[t]
\centering
\scriptsize
\caption{
Summary of point-forecasting performance across the overall, open-order, and future-order settings. Best values are shown in bold and second-best values are underlined.
 }
\label{tab:wmape}
\input{tables/wmape}
\end{table*}

\subsection{Open-Order and Future-Order Forecasting}
Separating open and future orders clarifies how LT-ICL's aggregate performance is distributed across forecasting settings (see \autoref{fig:cd}, \autoref{tab:wmape}, and \autoref{tab:scrps}). For point forecasting, LT-ICL is strongest on future orders, where it achieves the best average rank, the most dataset-level wins, the highest win rate, and the lowest average WMAPE. On open orders, LT-ICL remains highly competitive, achieving the most wins, highest win rate and lowest rank, but has the second-best average WMAPE behind TabM-PH. The pattern is reversed for probabilistic forecasting. LT-ICL is strongest on open orders, where it obtains the best average SCRPS rank, the most wins, the highest win rate, and is second-lowest for average SCRPS. On future orders, LT-ICL again achieves the most wins, but RSF obtains the best average rank and win rate, while TabPFNv2 obtains the lowest average SCRPS.

These results suggest that LT-ICL's aggregate advantage is not driven by dominance in a single forecasting regime. Rather, the model performs consistently across both open-order and future-order settings. This is also reflected in the fact that the number of wins in each split is smaller than the total number of wins in the overall evaluation, indicating that LT-ICL's overall performance is accumulated across different forecasting targets. In contrast, the strongest competing baseline changes across settings and metrics: neural architectures are most competitive for open-order point forecasts, while tree-based survival ensembles are particularly competitive for probabilistic future-order forecasts.

We next isolate the contribution of censored open-order information by comparing LT-ICL with an ablation that removes open orders from the context. We use paired Wilcoxon signed-rank tests over dataset-level metric differences and report median effect sizes. Removing open orders from the context significantly worsens overall WMAPE, with a median increase of $0.044$ ($p=2.26\times 10^{-6}$). The effect is larger for open-order forecasts, where WMAPE increases by $0.089$ ($p=1.19\times 10^{-7}$). For future orders, WMAPE increases only slightly, with a median effect of $0.005$, and the difference is not statistically significant ($p=0.811$). The same pattern holds for probabilistic forecasting. Removing open orders significantly worsens overall SCRPS, with a median increase of $0.028$ ($p=2.01\times 10^{-5}$), and the effect is again larger for open-order forecasts, where SCRPS increases by $0.078$ ($p=1.19\times 10^{-7}$). For future-order forecasts, removing open orders slightly decreases SCRPS by $0.004$, but this effect is small and not statistically significant ($p=0.406$).

These results indicate that the benefit of censored observations is concentrated in the setting where censoring is directly relevant. Open orders provide lower-bound information about their realized lead times, and LT-ICL is able to use this partial information to improve forecasts for open orders. By contrast, future orders have no elapsed-duration information at prediction time, so censored context observations are less directly useful. The absence of a significant future-order improvement suggests that censored observations are not uniformly beneficial for all prediction targets; their value depends on whether the target setting can exploit elapsed-duration information.

\begin{table*}[t]
\centering
\scriptsize
\caption{
Summary of probabilistic forecasting performance across the overall, open-order, and future-order settings. Best values are shown in bold and second-best values are underlined.
}
\label{tab:scrps}
\input{tables/scrps}
\end{table*}

\begin{figure}[t]
    \centering
    \includegraphics[width=\linewidth]{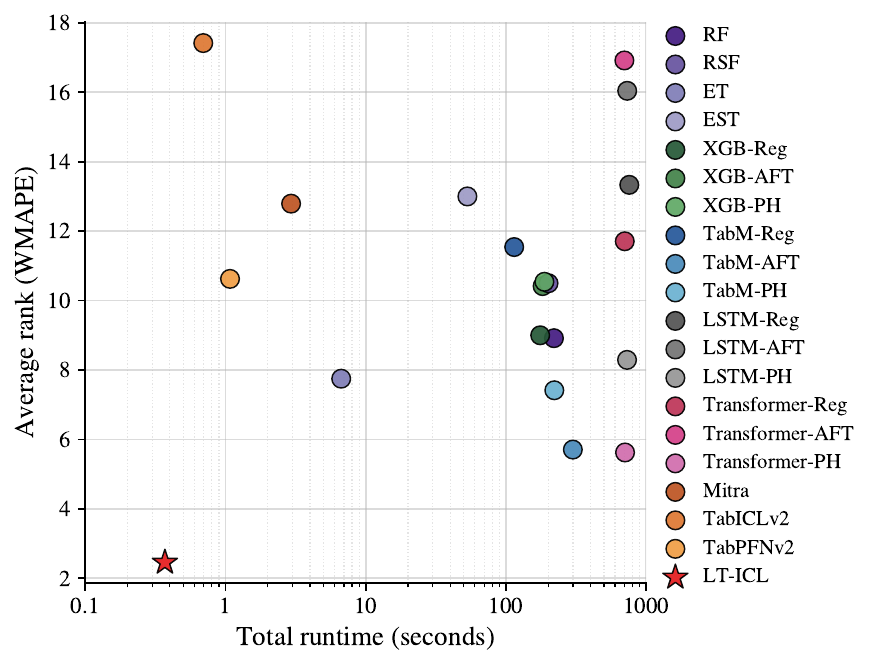}
    \caption{
        Average deployment-time runtime versus average WMAPE rank across benchmark datasets. Lower rank indicates better predictive performance. Runtime is shown on a log scale.
    }
    \label{fig:runtime}
\end{figure}

\subsection{Deployment-Time Efficiency}

\autoref{fig:runtime} compares deployment-time runtime with average rank under overall WMAPE. Deployment-time runtime is defined as the computation required to adapt a method to a new dataset. For supervised baselines, this includes hyperparameter tuning, model training, and inference. For in-context learning methods, adaptation occurs through the context, so deployment primarily consists of preprocessing and inference. This excludes the fixed cost of pretraining, which is amortized across downstream deployments. The practical advantage of LT-ICL is therefore greatest in repeated-deployment settings, where a single pretrained model is reused across many lead time forecasting problems and the relevant cost is the marginal cost of adapting to each new dataset.

LT-ICL achieves both the lowest average WMAPE rank and the lowest average deployment-time runtime. This places it in the lower-left region of \autoref{fig:runtime}, indicating that it improves the accuracy--runtime frontier rather than simply moving along it. Existing in-context learning baselines are also computationally efficient, but obtain weaker average ranks. One possible explanation is that generic tabular in-context learning priors are less well aligned with industrial lead time data, where censoring, sparse part--site--supplier structure, and heavy-tailed lead time behavior are central. Among the supervised baselines, extra trees provide the strongest runtime--accuracy trade-off, while neural architectures tend to achieve competitive ranks only with substantially higher deployment-time cost.

Finally, we examine whether task-specific fine-tuning can further trade runtime for predictive performance. Fine-tuning LT-ICL for five gradient steps using Schedule-Free AdamW \cite{defazio2024roadscheduled} with learning rate $\eta=5\times10^{-6}$ improves WMAPE by $0.038$ ($p=1.75\times10^{-4}$) and SCRPS by $0.022$ ($p=0.0126$). However, this increases deployment-time runtime by an average factor of $34.3\times$. These results show that task-specific optimization can further specialize the pretrained model, but by relinquishing the main computational advantage of in-context adaptation. We therefore report the non-fine-tuned LT-ICL model as the primary method and view fine-tuning as an optional extension for applications where additional per-dataset optimization is acceptable.

%% file: tables/wmape.tex
\begin{tabular}{l|rrrr|rrrr|rrrr}
\toprule
\textbf{Model}
& \multicolumn{4}{c|}{\textbf{Overall Orders}}
& \multicolumn{4}{c|}{\textbf{Open Orders}}
& \multicolumn{4}{c}{\textbf{Future Orders}} \\
\cmidrule(lr){2-5}
\cmidrule(lr){6-9}
\cmidrule(lr){10-13}
& Rank $\downarrow$ & Wins $\uparrow$ & WinRate $\uparrow$ & WMAPE $\downarrow$
& Rank $\downarrow$ & Wins $\uparrow$ & WinRate $\uparrow$ & WMAPE $\downarrow$
& Rank $\downarrow$ & Wins $\uparrow$ & WinRate $\uparrow$ & WMAPE $\downarrow$\\
\midrule

RF
& 8.92 & 0 & 0.58 & 0.603
& 9.71 & 0 & 0.54 & 0.489
& 7.38 & 1 & 0.66 & \underline{0.714} \\

RSF
& 10.50 & 0 & 0.50 & 0.645
& 11.42 & 0 & 0.45 & 0.524
& 7.25 & \underline{3} & 0.67 & 0.789 \\

ET
& 7.75 & \underline{2} & 0.64 & 0.607
& 9.75 & 0 & 0.54 & 0.495
& \underline{6.54} & \underline{3} & \underline{0.71} & 0.766 \\

EST
& 13.00 & 0 & 0.37 & 0.720
& 13.96 & 0 & 0.32 & 0.570
& 9.12 & 1 & 0.57 & 0.915 \\

XGB-Reg
& 9.00 & 0 & 0.58 & 0.610
& 10.92 & 0 & 0.48 & 0.497
& 7.54 & 0 & 0.66 & 0.745 \\

XGB-AFT
& 10.42 & 1 & 0.50 & 0.627
& 7.67 & 3 & 0.65 & 0.484
& 10.17 & 2 & 0.52 & 0.730 \\

XGB-PH
& 10.54 & 0 & 0.50 & 0.865
& 9.92 & 0 & 0.53 & 0.757
& 12.38 & 0 & 0.40 & 1.278 \\

TabM-Reg
& 11.54 & 0 & 0.45 & 0.659
& 11.38 & 0 & 0.45 & 0.515
& 11.46 & 0 & 0.45 & 0.867 \\

TabM-AFT
& 5.71 & 1 & 0.75 & 0.561
& 5.75 & 1 & 0.75 & 0.416
& 8.38 & 0 & 0.61 & 0.770 \\

TabM-PH
& 7.42 & \underline{2} & 0.66 & 0.575
& \underline{4.17} & 3 & \underline{0.83} & \textbf{0.383}
& 10.75 & 0 & 0.49 & 0.798 \\

LSTM-Reg
& 13.33 & 0 & 0.35 & 0.684
& 13.25 & 0 & 0.36 & 0.529
& 12.79 & 0 & 0.38 & 0.892 \\

LSTM-AFT
& 16.04 & 0 & 0.21 & 0.810
& 17.46 & 1 & 0.13 & 0.785
& 14.17 & 0 & 0.31 & 0.874 \\

LSTM-PH
& 8.29 & 0 & 0.62 & 0.597
& 5.38 & \underline{4} & 0.77 & 0.393
& 11.88 & 0 & 0.43 & 0.867 \\

Transformer-Reg
& 11.71 & 0 & 0.44 & 0.688
& 12.29 & 0 & 0.41 & 0.536
& 10.79 & 1 & 0.48 & 0.864 \\

Transformer-AFT
& 16.92 & 0 & 0.16 & 0.896
& 18.92 & 0 & 0.06 & 0.893
& 15.04 & 1 & 0.26 & 0.899 \\

Transformer-PH
& \underline{5.62} & \underline{2} & \underline{0.76} & \underline{0.531}
& 4.46 & 3 & 0.82 & 0.393
& 9.96 & 0 & 0.53 & 0.758 \\

Mitra
& 12.79 & 0 & 0.38 & 0.941
& 11.50 & 0 & 0.45 & 0.629
& 13.38 & 0 & 0.35 & 1.306 \\

TabICLv2
& 17.42 & 1 & 0.14 & 2.834
& 16.42 & 0 & 0.19 & 1.259
& 17.46 & 1 & 0.13 & 4.218 \\

TabPFNv2
& 10.62 & 0 & 0.49 & 0.608
& 12.04 & 0 & 0.42 & 0.509
& 8.29 & 0 & 0.62 & 0.727 \\

\midrule
\textbf{LT-ICL (Ours)}
& \textbf{2.46} & \textbf{15} & \textbf{0.92} & \textbf{0.467}
& \textbf{3.67} & \textbf{9} & \textbf{0.86} & \underline{0.388}
& \textbf{5.29} & \textbf{11} & \textbf{0.77} & \textbf{0.619} \\
\bottomrule
\end{tabular}

%% file: tables/scrps.tex
\begin{tabular}{l|rrrr|rrrr|rrrr}
\toprule
\textbf{Model}
& \multicolumn{4}{c|}{\textbf{Overall Orders}}
& \multicolumn{4}{c|}{\textbf{Open Orders}}
& \multicolumn{4}{c}{\textbf{Future Orders}} \\
\cmidrule(lr){2-5}
\cmidrule(lr){6-9}
\cmidrule(lr){10-13}
& Rank $\downarrow$ & Wins $\uparrow$ & WinRate $\uparrow$ & SCRPS $\downarrow$
& Rank $\downarrow$ & Wins $\uparrow$ & WinRate $\uparrow$ & SCRPS $\downarrow$
& Rank $\downarrow$& Wins $\uparrow$ & WinRate $\uparrow$ & SCRPS $\downarrow$ \\
\midrule

RSF
& \underline{4.29} & 1 & \underline{0.70} & \underline{0.378}
& 6.42 & 1 & 0.51 & 0.307
& \textbf{3.29} & 4 & \textbf{0.79} & 0.511 \\

EST
& 5.67 & 2 & 0.58 & 0.407
& 7.67 & 0 & 0.39 & 0.329
& 4.29 & 3 & 0.70 & 0.575 \\

XGB-AFT
& 8.42 & 1 & 0.33 & 0.481
& 6.42 & 1 & 0.51 & 0.310
& 9.08 & 0 & 0.27 & 0.666 \\

XGB-PH
& 7.08 & 0 & 0.45 & 0.653
& 6.62 & 1 & 0.49 & 0.392
& 7.79 & 0 & 0.38 & 1.075 \\

TabM-AFT
& 5.21 & 0 & 0.62 & 0.417
& 4.50 & 0 & 0.68 & 0.273
& 6.21 & 1 & 0.53 & 0.638 \\

TabM-PH
& 4.79 & \underline{3} & 0.66 & 0.408
& \underline{3.42} & \underline{5} & \underline{0.78} & \textbf{0.253}
& 6.00 & 0 & 0.55 & 0.606 \\

LSTM-AFT
& 10.75 & 0 & 0.11 & 0.636
& 10.92 & 0 & 0.10 & 0.604
& 9.42 & 0 & 0.23 & 0.725 \\

LSTM-PH
& 6.92 & 1 & 0.46 & 0.448
& 4.42 & 2 & 0.69 & 0.265
& 8.38 & 1 & 0.33 & 0.720 \\

Transformer-AFT
& 10.50 & 0 & 0.14 & 0.637
& 11.38 & 0 & 0.06 & 0.612
& 8.75 & 1 & 0.30 & 0.657 \\

Transformer-PH
& 5.67 & 1 & 0.58 & 0.409
& 4.00 & \underline{5} & 0.73 & 0.271
& 7.67 & 0 & 0.39 & 0.635 \\

TabPFNv2
& 6.71 & 1 & 0.48 & 0.426
& 9.17 & 0 & 0.26 & 0.397
& 3.62 & \underline{5} & 0.76 & \textbf{0.474} \\

\midrule
\textbf{LT-ICL (Ours)}
& \textbf{2.00} & \textbf{14} & \textbf{0.91} & \textbf{0.344}
& \textbf{3.08} & \textbf{9} & \textbf{0.81} & \underline{0.257}
& \underline{3.50} & \textbf{9} & \underline{0.77} & \underline{0.497} \\

\bottomrule
\end{tabular}

%% file: _06_conclusion.tex
\section{Conclusion \& Future Work}\label{sec:conclusion}
This paper studied supplier lead time forecasting as a right-censored probabilistic prediction problem. To address this setting, we introduced LT-ICL, a censoring-aware in-context learning model for tabular lead time forecasting. By combining a transformer backbone with a conditional normalizing-flow head, LT-ICL produces a full predictive distribution over lead times.  The model is pretrained once on synthetic right-censored lead time tasks drawn from a hierarchical semi-parametric prior, then applied to new industrial datasets through in-context learning without task-specific parameter updates. Our excess-risk analysis supports this design by bounding LT-ICL’s forecasting error in terms of prior misspecification and amortized approximation error. This identifies two principled design levers through which LT-ICL can be improved: a richer synthetic prior that better represents real procurement systems, and a better-trained network that more accurately approximates the prior-induced posterior predictive distribution. 

The empirical results support the proposed formulation across 24 proprietary supply-chain datasets spanning seven industries. LT-ICL achieved the lowest overall WMAPE on 15 of 24 datasets and the lowest overall SCRPS on 14 datasets spread uniformly across industries. The performance of our method is best explained as a balanced approach across both open order and future order forecasting, and does not uniformly dominate a single evaluation setting. We justify the censoring-aware approach through an ablation: including censored observations substantially improved open-order forecasting, while the corresponding effect on future orders was not statistically significant. This suggests that the benefit of the proposed approach is using partial lead time information in the setting where it is operationally relevant.

Several avenues for further extensions to LT-ICL remain. First, all datasets in our evaluation were restricted to a shared covariate schema to preserve comparability and confidentiality. LT-ICL could be extended to handle arbitrary covariates in a similar fashion to current tabular foundation models. Second, future work could study a planning-cost simulation or live deployment to complement the retrospective censoring used in our evaluation. Finally, LT-ICL is limited by the quadratic context cost of transformer architectures. Improving the context size of our model through attention approximations remains a promising direction for further research. Overall, the results support right-censored probabilistic forecasting as a useful formulation for supplier lead time prediction. LT-ICL provides an initial demonstration that pretrained in-context models can combine censoring-aware inference, strong forecasting performance, and low task-specific adaptation cost in industrial lead time forecasting.